\documentclass[final,12pt]{clear2023} % Include author names
\usepackage{tikz}
\usepackage{cleveref}
\usepackage{xcolor}
\usepackage{adjustbox}
\usepackage{pgfplots}
\usepackage{xspace}
\usepackage{comment}

\DeclareMathOperator*{\argmin}{arg\,min}

% \def\compactify{\itemsep=0pt \topsep=0pt \partopsep=0pt \parsep=0pt}
% \let\latexusecounter=\usecounter
% \newenvironment{CompactItemize}
%   {\def\usecounter{\compactify\latexusecounter}
%    \begin{itemize}}
%   {\end{itemize}\let\usecounter=\latexusecounter}
% \newenvironment{CompactEnumerate}
%   {\def\usecounter{\compactify\latexusecounter}
%    \begin{enumerate}}
%   {\end{enumerate}\let\usecounter=\latexusecounter}

% \pgfplotsset{compat=newest}
% \pgfplotsset{
%     colormap={whiteblack}{gray(0cm)=(1); gray(0.5cm)=(0.2); gray(1cm)=(0)}
% }

% acronyms:
%==============
\usepackage[acronym]{glossaries}
\glsdisablehyper
\newacronym{ml}{ML}{Machine Learning}
\newacronym{ica}{ICA}{Independent Component Analysis}
\newacronym{ood}{OOD}{Out of Distribution Generalization}
\newacronym{dag}{DAG}{Directed Acyclic Graph}
\newacronym{scm}{SCM}{Structural Causal Model}
\newacronym{dscm}{DSCM}{Deep Structural Causal Model}
\newacronym{gan}{GAN}{Generative Adversarial Network}
\newacronym{vae}{VAE}{Variational Auto-Encoder}
\newacronym{nf}{NF}{Normalizing Flow}
\newacronym{elbo}{ELBO}{Evidence Lower Bound}
\newacronym{mse}{MSE}{Mean Squared Error}
\newacronym{pdf}{PDF}{Probability Distribution Function}
\newacronym{cdf}{CDF}{Cumulative Distribution Function}

% Listings
% \usepackage{listings}
% The following packages will be automatically loaded:
% amsmath, amssymb, natbib, graphicx, url, algorithm2e

\title[Counterfactual (Non-)identifiability of Learned Structural Causal Models]{Counterfactual (Non-)identifiability of Learned SCMs}
\usepackage{times}
% Use \Name{Author Name} to specify the name.
% If the surname contains spaces, enclose the surname
% in braces, e.g. \Name{John {Smith Jones}} similarly
% if the name has a "von" part, e.g \Name{Jane {de Winter}}.
% If the first letter in the forenames is a diacritic
% enclose the diacritic in braces, e.g. \Name{{\'E}louise Smith}

% Two authors with the same address
% \clearauthor{\Name{Author Name1} \Email{abc@sample.com}\and
%  \Name{Author Name2} \Email{xyz@sample.com}\\
%  \addr Address}

% Three or more authors with the same address:
% \clearauthor{\Name{Author Name1} \Email{an1@sample.com}\\
%  \Name{Author Name2} \Email{an2@sample.com}\\
%  \Name{Author Name3} \Email{an3@sample.com}\\
%  \addr Address}

% Authors with different addresses:
\clearauthor{%
 \Name{Arash Nasr-Esfahany} \Email{arashne@mit.edu}\\
 \addr Massachusetts Institute of Technology%
 \AND
 \Name{Emre K\i c\i man} \Email{emrek@microsoft.com}\\
 \addr Microsoft Research%
}

\begin{document}

\maketitle

\begin{abstract}%
Recent advances in probabilistic generative modeling have motivated learning Structural Causal Models (SCM) from observational datasets using deep conditional generative models, also known as \emph{\gls{dscm}}. 
If successful, \gls{dscm}s can be utilized for causal estimation tasks, e.g., for answering counterfactual queries~(\cite{Pawlowski2020DeepSC}).
In this work, we warn practitioners about non-identifiability of counterfactual inference from observational data, even in the absence of unobserved confounding and assuming known causal structure.
We prove counterfactual identifiability of monotonic generation mechanisms with single dimensional exogenous variables.
For general generation mechanisms with multi-dimensional exogenous variables, we provide an impossibility result for counterfactual identifiability, motivating the need for parametric assumptions.
As a practical approach, we propose a method for estimating worst-case errors of learned \gls{dscm}s' counterfactual predictions.
The size of this error can be an essential metric for deciding whether or not \gls{dscm}s are a viable approach for counterfactual inference in a specific problem setting.
In evaluation, our method confirms negligible counterfactual errors for an identifiable \gls{scm} from prior work, and also provides informative error bounds on counterfactual errors for a non-identifiable synthetic \gls{scm}.
\end{abstract}

\begin{keywords}%
  Counterfactual Inference, Identifiability, Conditional Generative Models, Deep Structural Causal Models
\end{keywords}

\section{Introduction}

%%% Some commands to help me bullet-point outline paragraphs
%%% -- EMK
\begin{comment}
\newcommand{\bpara}[2]{
{\em {#1}}
\begin{itemize} 
{#2} 
\end{itemize}
}

\newcommand{\si}{\item }
\end{comment}

%\begin{comment}
\newcommand{\bpara}[2]{#2}
\newcommand{\si}{}
%\end{comment}

\newcommand{\emk}[2]{{\textcolor{blue}{{\bf #1:} #2 --emre}}}

Counterfactuals queries about hypothetical worlds reside in the third rung of the ladder of causation~(\cite{why}).
They investigate alternative imaginary outcomes had some precondition been different while everything else stayed the same.
Such retrospective ``what-if'' questions are critical for policy analyses~(\cite{si}), root-cause analyses~(\cite{root-cause}), credit assignment~(\cite{credit}), harm measurement~(\cite{harm}), legal analyses~(\cite{legal}), fairness~(\cite{fairness}), explainability~(\cite{explainability}), and trace-driven simulation~(\cite{causalsim}), across a variety of domains from epidemiology and medicine~(\cite{medicine}) to economics, computer systems~(\cite{sage}), and advertising~(\cite{ads}).
For example, in a medical scenario, we might ask whether a specific patient would have healed if they had not taken a drug; in policy analyses, we might ask whether societal measures would have been higher if government had enacted different economic policies; and in business scenarios, we might ask whether customers would have bought more products if a company had used a different marketing strategy.

The main challenge in counterfactual inference from \emph{observational data} is that both the generation mechanisms and latent world properties are unknown.
If we knew the generation mechanisms, counterfactual inference would have been straightforward~(\cite{causality-book}).
Motivated by the developments in probabilistic generative modeling, a recent line of work~(\cite{Dash2022EvaluatingAM,Khemakhem2021CausalAF,Pawlowski2020DeepSC,Sanchez2022DiffusionCM}) suggests learning structured generative models (\gls{dscm}s) by fitting the observed data distributions.
This naturally raises the following question: Given the fact that identification is an essential part of every causal estimation~(\cite{dowhy,pch}), to what extent can we trust counterfactual estimates generated by \gls{dscm}s?

In this work, we investigate counterfactual identifiability of \gls{dscm}s from observational data.
We prove counterfactual identifiability for monotonic generation mechanism with single dimensional exogenous variables~(\Cref{sub:identifiable}).
For generation mechanisms with multi-dimensional exogenous variables, we prove an impossibility result by proposing a general counter-example for counterfactual identifiability~(\Cref{sub:unidentifiability}) which underscores the need for parametric assumptions.
Going through the counterfactual identification process for each set of assumptions is a cumbersome process.
Also, precise counterfactual identifiability may be too strong.
Sufficiently small errors in counterfactual estimates could be enough for practitioners.
Hence, in \Cref{sec:method}, we propose a computational method to measure the counterfactual ambiguity of learned generation mechanisms.

We evaluate our method with an identifiable \gls{scm} adopted from prior work, and also with a non-identifiable synthetic \gls{scm}.
In the identifiable case, our method confirms counterfactual identifiability by showing negligible error bounds for counterfactual estimates.
In the non-identifiable case, our method provides informative error bounds for counterfactual estimates, which can be used by practitioners as an important metric for deciding whether or not using \gls{dscm}s is a viable option in their specific problem of interest with their specific set of assumptions.
\if 0
%% Para level outline for intro
%
\bpara{{\bf Para 1:} intro to counterfactual inference}{
\si Counterfactual inference analyzes data from multiple, counterfactual worlds and uses information observed in one world to make inferences about unobserved values in a other worlds.
\si Given a world consisting of unobserved state $x_u$, observed state $x_o$, treatment $t$, and outcome $y$, we may ask what the outcome $y^*$ would have been in a counterfactual world if the treatment had taken a value $t^*$, with all other state ($x_u$,$x_o$) being equal.  
\si Such retrospective ``what-if'' analyses are critical for policy analyses, root-cause analyses, and reward attribution across a variety of domains from epidemiology to economics. %% epidemiology, business, advertising, economics, ... TODO: look up cites and consider adding.
\si For example, in a medical scenario, we might ask whether a specific patient would have healed if they had (or had not) taken a drug; in policy analyses, we might ask whether societal measures would be higher or lower if government had enacted different economic policies; and in business scenarios, we might ask whether customers would have bought more products if a company used a different marketing strategy.
%
%    in the background -> include from these papers the challenges of counterfactual inference, and
%    the more limited approaches people have taken (e.g., r260 describes 3 classes
%    of structures/scenarios that are identifiable; and shpitser08a assumes
%    experimental studies and dag is available).
}

\bpara{Para 2: challenges of counterfactual inference}{
\si A fundamental challenge in counterfactual inference is understanding to what degree the observed outcome $y$ identifies the unobserved (and stochastic) state $x_u$ that is necessary for inferring the counterfactual outcome $y^*$~(\cite{balke1994counterfactual,pearl2022probabilities}).
\si Although many classical and modern methods exist for answering interventional causal queries, there are not many for counterfactual ones.
\si One potential reason is that knowing the causal structure is insufficient for counterfactual inference.
\si To-date, approaches for counterfactual inference have relied on knowledge of both the causal structure and data-generating mechanisms.
\si For example, \cite{shpitser2008complete} assumes that the causal graph is known and that data is available from interventional studies.
%
%\si \cite{pearl2022probabilities} describes 
}

\bpara{Para 3: deep structural causal models}{
\si Motivated by the developments in probabilistic generative modeling, a recent line of work~(\cite{Dash2022EvaluatingAM,Khemakhem2021CausalAF,Pawlowski2020DeepSC,Sanchez2022DiffusionCM}) suggests learning structured generative models (Deep \gls{scm}s) from \emph{observational data}, and using them as the \gls{scm} for counterfactual inference.
%
%\si More recently, deep structural causal models have been developed for counterfactual inference~\cite{Pawlowski2020DeepSC} that use deep learning approaches to infer the unobserved $x_u$ given observed variables.
%
\si This approach has shown empirical promise in synthetic and semi-synthetic evaluations.
\si However, the limits and implicit assumptions made by these approaches are not yet well understood.
\si This naturally raises the following question: Given the fact that identification is an essential part of every causal estimation, to what extent can we trust counterfactual inference results coming out of deep \gls{scm}s?
}

\bpara{Para 4: Unidentifiability}{
\si In this paper, we warn practitioners about the unidentifiability of counterfactual inference and present an empirical approach to sensitivity analysis that can bound identification errors.
\si \emk{todo}{Add summary of our contribution}
}

% - 
% - 

%
% 3. In this paper, we warn practitioners about unidentifiability of
%    counterfactual inference; and present an empirical approach to sensitivity
%    analysis that can bound identification errors.
%

%
% 5. summarize empirical approach to sensitivity analysis.
%
% 6. results of our studies and practical impact of this work.
%

%*\textbf{Causal inference intro}

%\noindent*\textbf{Counterfactual inference intro}

%\noindent*\textbf{Causal and deep learning fusion intro}

%Although many classical and modern methods exist for answering interventional queries, there are not many for counterfactual ones.
%One potential reason is that knowing the causal structure is insufficient for counterfactual inference.
%One has to know the data generation mechanisms to be able to answer counterfactual queries.

%Motivated by the developments in probabilistic generative modeling, a recent line of work~(\cite{Dash2022EvaluatingAM,Khemakhem2021CausalAF,Pawlowski2020DeepSC,Sanchez2022DiffusionCM}) suggests learning structured generative models (Deep \gls{scm}s) from \emph{observational data}, and using them as the \gls{scm} for counterfactual inference.
%This naturally raises the following question: Given the fact that identification is an essential part of every causal estimation, to what extent can we trust counterfactual inference results coming out of deep \gls{scm}s?

\fi
\section{Background}
\label{sec:background}

\noindent \textbf{Notation:} We use capital letters for referring to random variables (e.g., $V$),
lowercase letters for referring to realizations of random variables (e.g., $v$),
boldface font for referring to a set of random variables (e.g., $\boldsymbol{V} = \{V_1, \ldots, V_n\}$),
and $\,{\buildrel d \over =}\,$ for referring to distributional equivalence.

\subsection{Structural Causal Model (SCM)}
An \gls{scm} is our causal model of the world.
It explains how nature assigns values to a set of variables of interest.
It allows us to deal rigorously with causal queries by pinning down the causal story behind the generation of our data.

Formally, an \gls{scm} $\mathcal{M}$ consists of two sets of variables $\boldsymbol{U} = \{U_1, \ldots, U_n\}$ and $\boldsymbol{V} = \{V_1, \ldots, V_n\}$.
Variables in $\boldsymbol{U}$ are called \emph{exogenous variables} as they are external to $\mathcal{M}$, and we do not model their generation.
Variables in $\boldsymbol{V}$ are called \emph{endogenous variables} as they are internal to $\mathcal{M}$, and we do model their generation.
$\mathcal{M}$ also includes a set of functions $\boldsymbol{f} = \{f_1, \ldots, f_n\}$ that describe generation of each endogenous variable based on the values of other variables in the model, i.e., $\forall i: X_i := f_i(U_i, \boldsymbol{PA}_i)$ where $\boldsymbol{PA}_i \subseteq \boldsymbol{V} \backslash V_i$ is the subset of $\boldsymbol{V}$ that are parents of $V_i$ in the causal \gls{dag}.
Throughout this paper, we assume absence of any unobserved confounders, or a Markovian \gls{scm}.
This means that exogenous variables are statistically independent and their distribution factorizes, i.e., $\forall \boldsymbol{u}:p_U(\boldsymbol{u}) = \prod_{i=1}^n p_{U_i}(u_i)$.
The collection of generation functions $\boldsymbol{f}$ and the prior distribution over $\boldsymbol{U}$ entails a joint distribution over $p^{\mathcal{M}}(\boldsymbol{V}, \boldsymbol{U})$.

\subsection{Interventions and the Do-operator}
In \gls{scm} $\mathcal{M}$, let $\boldsymbol{X}$ be a subset of endogenous variables, and $\boldsymbol{x}$ be their realizations.
The submodel $\mathcal{M}_{\boldsymbol{x}}$ is the \gls{scm} with the same exogenous and endogenous variables, and prior distribution over $\boldsymbol{U}$ as $\mathcal{M}$.
However, its set of generation functions change to
\begin{equation*}
\boldsymbol{f}_{\boldsymbol{x}} = \{f_i | V_i \notin \boldsymbol{X}\} \cup \{f'_j(U_j, \boldsymbol{PA}_j) := x_j | V_j \in \boldsymbol{X}\}.    
\end{equation*}
The do-operation $\text{do}(\boldsymbol{X} := \boldsymbol{x'})$ represents applying the \emph{atomic intervention} that changes the \gls{scm} from $\mathcal{M}$ to $\mathcal{M}_{\boldsymbol{x'}}$.
Atomic interventions and the corresponding do-operations can be generalized to other interventions (e.g., functional, soft, etc.)~(\cite{causality-book,soft-calculus,soft2,soft3}).

\subsection{Counterfactuals}
\label{sub:cf}
Unlike interventional queries that operate at the population level, providing aggregate statistics about the effect of interventions (i.e., exogenous variables $\boldsymbol{U}$ are sampled from the prior exogenous distribution $p_{\boldsymbol{U}}(\cdot)$), counterfactual queries operate at the unit level.
Given the evidence $\boldsymbol{x}$, where $\boldsymbol{x}$ is a realization of a subset of endogenous variables $\boldsymbol{V}$, \emph{what would $\textbf{V}$ have been had we done the intervention $\text{do}(\boldsymbol{X} := \boldsymbol{x'})$}? This is an example of a counterfactual query which we formally represent as $p^{\mathcal{M}_{\boldsymbol{x'}}}_{\boldsymbol{V}}(\cdot|\boldsymbol{x})$ or in short, $p_{\boldsymbol{V}_{\boldsymbol{x'}}}(\cdot|\boldsymbol{x})$.

\cite{causality-book} provides an algorithmic procedure for answering counterfactual queries, given the underlying \gls{scm} $\mathcal{M}$:
\begin{enumerate}
    \item \textbf{Abduction:} predict the distribution of exogenous variables given the observed evidence, i.e., infer the posterior $p_{\boldsymbol{U}}(\cdot|\boldsymbol{x})$.
    \item \textbf{Action:} Perform the desired intervention $\text{do}(\boldsymbol{X} := \boldsymbol{x'})$, and also change the distribution of exogenous variables to the inferred posterior in the abduction step ($p_{\boldsymbol{U}}(\cdot|\boldsymbol{x})$).
    In other words, change change the \gls{scm} $\mathcal{M}$ to $\mathcal{M}_{\boldsymbol{x'}|\boldsymbol{x}}$.
    \item \textbf{Prediction:} Estimate the counterfactual distribution of interest $p_{\boldsymbol{V}_{\boldsymbol{x'}}}(\cdot|\boldsymbol{x})$ using the modified \gls{scm} $\mathcal{M}_{\boldsymbol{x'}|\boldsymbol{x}}$ and its entailed distribution, calculate the counterfactual query of interest.
\end{enumerate}

For a more extensive overview of the causal literature, see \cite{causality-book,elements,primer}

\subsection{Deep Structural Causal Model (DSCM)}
\label{sub:dscm}
The three-step procedure described in \Cref{sub:cf} assumes a known \gls{scm}, including known generation functions $\boldsymbol{f}$.
However, this knowledge is commonly unavailable in real-world problems.
To overcome this challenge, \cite{Pawlowski2020DeepSC} propose a \gls{dscm} methodology for learning the generation functions (and also their corresponding inference mechanisms) from the observed data $p_{\boldsymbol{V}}(\cdot)$.
We refer to the set of learned generation mechanisms as $\hat{\boldsymbol{f}} = \{\hat{f_1}, \ldots, \hat{f_n}\}$.

\gls{dscm}s use deep conditional generative models for modeling the generation mechanism of each of the endogenous variables, i.e., $f_i, i \in \{1, \ldots, n\}$.
Deep conditional generative models start from samples of a tractable probability distribution, e.g., standard Gaussian, and warp this tractable distribution (given the conditions) to resemble the empirical conditional distribution observed in the data.
They suggest three different categories of generative modeling for learning the generation functions: ``\emph{invertible, explicit}'' which uses Normalizing Flows (NF)~(\cite{NF1, NF2}), ``\emph{amortized, explicit}'' which uses Variational Auto-encoders (VAE)~(\cite{vae}), and ``\emph{amortized, implicit}~(\cite{implicit})'' that uses Generative Adversarial Networks (GAN)~(\cite{gan,wgan,ali}).
They argue that learning generation functions and their corresponding inference mechanisms (a tractable procedure for estimating $p_{U_i|\boldsymbol{PA}_i, V_i}(\cdot|\boldsymbol{pa_i}, y_i)$ which is necessary for the abduction step) enables tractable counterfactual inference from observational data, without any analysis of counterfactual identifiability.
For more details, see \cite{Pawlowski2020DeepSC}.

As learning the generation function of $V_i$ does not have any effect on learning the generation function of $V_j$ for $i \neq j$, we focus on learning the generation function of a single node $V_i$ henceforth.
For ease of notation, we drop the subscript $i$, refer to the node for which we are learning the generation mechanism as $Y$, its causal parents as $\boldsymbol{T}$, its corresponding exogenous variables as $U$, its generation mechanism given its parents as $f(\cdot, \boldsymbol{T})$, and the learned conditional generation mechanism parameterized by $\theta$ as $\hat{f_\theta}(\cdot, \boldsymbol{T})$.
\section{Related Work}
\label{sec:related}

Recently, there has been a surge of interest in using deep learning for answering counterfactual queries.
\cite{Pawlowski2020DeepSC} develop a framework for learning \gls{dscm}s from observational datasets, and using them for answering both interventional and counterfactual queries.
\cite{deci} uses \gls{nf}s~(\cite{spline-flows,flow2,flow3,flow4}) and variational inference~(\cite{vi1,vi2,vi3}) for causal discovery~(\cite{discovery1}) and learning the non-linear additive noise \gls{scm}~(\cite{anm1,anm2,anm3}) that can further be utilised for answering causal queries including counterfactuals.
The idea of using \gls{nf}s for leaning the \gls{scm} and counterfactual inference was also proposed in \cite{Khemakhem2021CausalAF}.
Counterfactual image generation has also been explored using \gls{gan}s~(\cite{ali}) and Denoising Diffusion Models~(\cite{score,denoising}) in \cite{Dash2022EvaluatingAM} and \cite{Sanchez2022DiffusionCM}, respectively.
\cite{causalsim} learns generation mechanisms from interventional data, and does counterfactual inference for trace-driven simulation assuming low-rank generation mechanisms.
\cite{rlcf} uses conditional \gls{gan}s~(\cite{bidirectional-gan}) to learn the step-wise generation mechanism for generating counterfactual experience for sample efficient reinforcement learning~(\cite{rl}), assuming monotonicity of generation mechanisms.

\cite{shpitser2008complete,cf-ident1,cf-ident2} perform non-parametric counterfactual identification using combination of observational and experimental data.
They do not take into account parametric and functional form assumptions, as we do in this work.
Also, our definition of counterfactual identifiability (\Cref{def:gen}) differs as it implies identifiability of every counterfactual query.
\cite{disc1,disc2,disc3} analyze counterfactual identifiability of discrete \gls{scm}s.
\cite{partial} provides bounds on counterfactual estimates for \gls{scm}s with discrete endogenous variables by mapping it to polynomial programming~(\cite{polynomial1,polynomial2}) and estimating the answer with Monte Carlo algorithms.
\cite{emp1,emp2} propose computational methods for identification of interventional queries, while our proposed method~(\Cref{sec:method}) estimates counterfactual ambiguity of \gls{dscm}s.
\section{Motivation}
\label{sec:motiv}
In this section, we use a simple example to demonstrate how \gls{dscm}s can fail in counterfactual inference. Consider the following simple generation mechanism for $Y$ given its single dimensional binary parent $T$:
\begin{equation}
    T \sim Bernoulli(0.5),\; \; U \sim Unif(0, 1), \; Y^1 = 
    \begin{cases} 
    U, &T = 1\\
    U-1, &T = 0          
    \end{cases}
\label{eqn:simple_scm}
\end{equation}
Suppose we have access to a sufficiently large dataset of observed variables $(t, y)$ generated according to \Cref{eqn:simple_scm}.
Furthermore, suppose we are interested in answering a counterfactual query, for instance, \emph{What would have happened to $Y$ had we done $t'=1$, given that $Y$ was $y$ when $T$ was $t=0$ (evidence)?}

As described in \Cref{sub:dscm}, prior work suggests using conditional generative models for learning the conditional distribution of $P_{Y|T}(\cdot|t)$ for different values of $t$. 
The learned conditional distribution parameterized by a neural network can then be interpreted as the data generation mechanism, or \gls{dscm}, and further utilized for counterfactual inference.

In our simple, motivating example, many generation mechanisms exist that match the observational conditional distribution $p_{Y|T}(\cdot|t)$.
For instance, consider the following alternative data generation mechanism:

\begin{equation}
    T \sim Bernoulli(0.5),\; \; U \sim Unif(0, 1), \; Y^2 =
    \begin{cases} 
    U, &T = 1\\
    -U, &T = 0          
    \end{cases}
\label{eqn:simple_scm2}
\end{equation}

\noindent Note that data generated according to \Cref{eqn:simple_scm} and \Cref{eqn:simple_scm2} have the same distribution, i.e.,
\begin{equation*}
    \forall y, t: p_{Y^1|T}(y|t) = p_{Y^2|T}(y|t)
\end{equation*}
This means that both \gls{scm}s are potentially valid \gls{dscm}s, as fitting the observational distribution is the objective function for learning \gls{dscm}s.

However, they give different answers to counterfactual queries. 
Recall the counterfactual query of interest in this example:  What would have happened to $Y$ had we done $t'=1$, given that $Y$ was $y$ when $T$ was $t=0$ (evidence)?
We use \Cref{eqn:simple_scm,eqn:simple_scm2} to answer this query:
\begin{equation*}
    p_{Y^1_{t'}}(\cdot|t, y) = \delta(1+y), \; 
    p_{Y^1_{t'}}(\cdot|t, y) = \delta(-y)
\end{equation*}
% These two data generation mechanisms provide completely different answers for the counterfactual query of interest.

By demonstration, this example underscores the risks of na\"ively using \gls{dscm}s for counterfactual inference without an identification analysis.
\section{Counterfactual Identifiablity}
\label{sec:identifiability}
As demonstrated by the motivational example in \Cref{sec:motiv}, using \gls{dscm}s without any identification analysis could result in inaccurate counterfactual estimates.
In this section, we formally define counterfactual identifiability with \gls{dscm}s.
Additionally, we prove counterfactual identifiability for bijective \gls{scm}s that satisfy a set of assumptions.
Finally, for \gls{scm}s with multi-dimensional exogenous variables, we provide an impossibility result by proposing a general method for creating counterexamples, i.e., creating alternative \gls{scm}s that fit the observational data distribution equally well, while giving different answers to counterfactual queries.

Similar to \cite{Pawlowski2020DeepSC}, we assume knowledge of the exact causal structure, i.e., causal \gls{dag}, and the absence of any unobserved confounding.
We focus on learning a single conditional generative model (and a corresponding inference mechanism) for the generation of the endogenous node $Y$, given its parents $\boldsymbol{T}$ in the causal graph $(\boldsymbol{T} \rightarrow Y)$ since conditional generative models learned for generation of different endogenous nodes do not affect training of each other.

As explained in \Cref{sub:dscm}, the learned conditional generative model for this specific node of the causal \gls{dag} maps exogenous variable $U$ sampled from a tractable distribution to samples of the observed conditional distribution $p_{Y|\textbf{T}}(\cdot|\textbf{T})$, conditioned on the value of its causal parents $\textbf{t}$.

\begin{definition}
\label{def:gen}
(Counterfactual identifiability with \gls{dscm}s):
Using \gls{dscm}s for learning generation mechanisms leads to identifiable counterfactual estimates if all \gls{dscm} pairs that fit the observational conditional distribution result in equivalent counterfactual estimates for any counterfactual query of interest.
In other words,
\begin{align*}
    &\forall \boldsymbol{t}: f_{\theta_1}(\cdot, \boldsymbol{t}) \,{\buildrel d \over =}\, p_{Y|\boldsymbol{T}}(\cdot|\boldsymbol{t})
    , f_{\theta_2}(\cdot, \boldsymbol{t}) \,{\buildrel d \over =}\, p_{Y|\boldsymbol{T}}(\cdot|\boldsymbol{t})\\
    \rightarrow
    &\forall \boldsymbol{t}, \boldsymbol{t'}: p_{Y_{\boldsymbol{t'}}}(\cdot|\boldsymbol{t}, y;\theta_1) \,{\buildrel d \over =}\, p_{Y_{\boldsymbol{t'}}}(\cdot|\boldsymbol{t}, y;\theta_2)
\end{align*}
\end{definition}
Intuitively, this definition means that fitting the observational conditional distribution is a good proxy for estimating counterfactuals if all \gls{dscm}s that fit the observational conditional distribution lead to the same set of counterfactual estimates.

\subsection{Bijective Generation Mechanisms}
\label{sub:bijective}
\begin{definition}
(Bijective Generation Mechanism):
Generation mechanism of $Y$ from its parents $\boldsymbol{T}$ and the corresponding exogenous variable $U$ is bijective if the mapping from $U$ to $Y$ is a bijection for each realization of parents.
\end{definition}
In bijective generation mechanisms, both the generation and inference mechanisms are deterministic functions, and no information is lost in generation.
In this family, we have an equivalent definition of counterfactual identifiability with \gls{dscm}s:
\begin{definition}
\label{def:bijection}
(Counterfactual identifiability of bijective \gls{scm}s):
Using \gls{dscm}s for learning bijective generation mechanisms leads to identifiable counterfactual estimates if all \gls{dscm} pairs that fit the observational conditional distribution, recover the same exogenous variable up to a nonlinear invertible indeterminacy $g(\cdot)$.
Note that $g(\cdot)$ does not depend on parents $\boldsymbol{T}$.
\end{definition}
\begin{theorem}
Definition \ref{def:bijection} and Definition \ref{def:gen} are equivalent for bijective generation mechanisms.
\end{theorem}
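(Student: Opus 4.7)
The plan is to prove the two directions separately, reducing both to manipulations of the deterministic abduction map. Since the generation mechanism $f_\theta(\cdot,\boldsymbol{t})$ is a bijection for every value of the parents, abduction collapses to the deterministic assignment $\hat{u}_\theta(y,\boldsymbol{t}) := f_\theta^{-1}(y,\boldsymbol{t})$, and the counterfactual outcome for evidence $(y,\boldsymbol{t})$ under the intervention $\mathrm{do}(\boldsymbol{T}:=\boldsymbol{t'})$ is the single value $f_\theta\bigl(f_\theta^{-1}(y,\boldsymbol{t}),\boldsymbol{t'}\bigr)$. Identifiability in the sense of \Cref{def:gen} therefore becomes the pointwise identity $f_{\theta_1}(f_{\theta_1}^{-1}(y,\boldsymbol{t}),\boldsymbol{t'}) = f_{\theta_2}(f_{\theta_2}^{-1}(y,\boldsymbol{t}),\boldsymbol{t'})$ for all $y,\boldsymbol{t},\boldsymbol{t'}$, while identifiability in the sense of \Cref{def:bijection} becomes the existence of an invertible $g$, not depending on $\boldsymbol{t}$, satisfying $f_{\theta_2}^{-1}(y,\boldsymbol{t}) = g\bigl(f_{\theta_1}^{-1}(y,\boldsymbol{t})\bigr)$.

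For the easy direction (\Cref{def:bijection} implies \Cref{def:gen}), I would first rewrite the indeterminacy relation as $f_{\theta_2}(u,\boldsymbol{t}) = f_{\theta_1}(g^{-1}(u),\boldsymbol{t})$ for every $u$ and $\boldsymbol{t}$, which follows by inverting and using bijectivity. Because $g$ is independent of $\boldsymbol{t}$, the same identity holds verbatim when $\boldsymbol{t}$ is replaced by an arbitrary intervention value $\boldsymbol{t'}$. Substituting $u = f_{\theta_2}^{-1}(y,\boldsymbol{t}) = g(f_{\theta_1}^{-1}(y,\boldsymbol{t}))$ into this relation at $\boldsymbol{t'}$ then collapses the counterfactual outputs of $\theta_1$ and $\theta_2$ onto each other.

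For the harder direction (\Cref{def:gen} implies \Cref{def:bijection}), I would define, for each parent configuration $\boldsymbol{t}$, the candidate indeterminacy
\[ g_{\boldsymbol{t}}(u) := f_{\theta_2}^{-1}\bigl(f_{\theta_1}(u,\boldsymbol{t}),\,\boldsymbol{t}\bigr), \]
which is a bijection as a composition of bijections. The matched-counterfactual identity immediately yields $f_{\theta_2}(g_{\boldsymbol{t}}(u),\boldsymbol{t'}) = f_{\theta_1}(u,\boldsymbol{t'})$ for every $\boldsymbol{t'}$ and every $u$. The crux is to argue that $g_{\boldsymbol{t}}$ does not depend on $\boldsymbol{t}$: given any two values $\boldsymbol{t}_1,\boldsymbol{t}_2$ and any $u$, the identity forces $f_{\theta_2}(g_{\boldsymbol{t}_1}(u),\boldsymbol{t'}) = f_{\theta_2}(g_{\boldsymbol{t}_2}(u),\boldsymbol{t'})$ for every $\boldsymbol{t'}$, and injectivity of $f_{\theta_2}(\cdot,\boldsymbol{t'})$ then gives $g_{\boldsymbol{t}_1}(u) = g_{\boldsymbol{t}_2}(u)$. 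Writing the common function as $g$ completes the construction.

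The main obstacle I anticipate is precisely this final step, which quietly leverages the fact that the counterfactual identity is assumed to hold for \emph{every} $\boldsymbol{t'}$, not only for values of $\boldsymbol{T}$ that are realized in the observational distribution, and that $\theta_1$ and $\theta_2$ share a common exogenous domain so that the composition defining $g_{\boldsymbol{t}}$ makes sense pointwise. \Cref{def:gen} is stated with quantifiers over all $\boldsymbol{t},\boldsymbol{t'}$, so the hypothesis is compatible, but it is worth flagging as the technical hinge on which the $\boldsymbol{t}$-independence of $g$ actually rests.
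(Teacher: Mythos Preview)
Your proposal is correct. For the direction \Cref{def:bijection} $\Rightarrow$ \Cref{def:gen}, your argument matches the paper's almost verbatim: both observe that a $\boldsymbol{t}$-independent indeterminacy $g$ cancels between abduction and prediction, so the composed counterfactual map $f_{\theta}\bigl(f_{\theta}^{-1}(y,\boldsymbol{t}),\boldsymbol{t'}\bigr)$ is unchanged.

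For the converse, the two arguments diverge in style. The paper proceeds by contradiction: it restricts (without loss of generality) to a binary parent, writes the learned model as $\hat{f}(\hat{U},\boldsymbol{t}_i)=f\bigl(g_i^{-1}(\hat{U}),\boldsymbol{t}_i\bigr)$ with possibly distinct $g_0,g_1$, and observes that the resulting counterfactual $f\bigl(g_1^{-1}(g_0(U)),\boldsymbol{t}_1\bigr)$ ``can cover a range of different values'' as $g_0,g_1$ vary, contradicting \Cref{def:gen}. You instead give a direct construction: define $g_{\boldsymbol{t}}(u)=f_{\theta_2}^{-1}\bigl(f_{\theta_1}(u,\boldsymbol{t}),\boldsymbol{t}\bigr)$, derive $f_{\theta_2}\bigl(g_{\boldsymbol{t}}(u),\boldsymbol{t'}\bigr)=f_{\theta_1}(u,\boldsymbol{t'})$ from the counterfactual identity, and then use injectivity of $f_{\theta_2}(\cdot,\boldsymbol{t'})$ to force $g_{\boldsymbol{t}_1}=g_{\boldsymbol{t}_2}$. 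Your route is tighter: it handles arbitrary parent domains without the binary reduction, and it replaces the paper's informal ``range of different values'' step with a clean injectivity argument that pins down exactly why $\boldsymbol{t}$-dependence of $g$ is incompatible with agreement on all counterfactuals. Your flagged caveat---that the argument hinges on the counterfactual identity holding for \emph{every} $\boldsymbol{t'}$---is precisely the right point to highlight and is licensed by the universal quantifier in \Cref{def:gen}.
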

\begin{proof}
Let the ground truth data generation mechanism of $Y$ based on its parents $\boldsymbol{T}$ be $f$, i.e., $Y = f(U, \boldsymbol{T})$.
First, we show how Definition \ref{def:bijection} implies Definition \ref{def:gen} for bijective mechanisms.
If the learned \gls{dscm} recovers the latent variable upto some nonlinear invertible indeterminacy ($\hat{U} = g(U)$), it means that the learned generation mechanism $\hat{f}$ is equal to the combination of the ground truth generation mechanism $f$ and the inverse of $g$, i.e., $\forall \hat{u}, \boldsymbol{t}: \hat{f}(\hat{u}, \boldsymbol{t}) = f(g^{-1}(\hat{u}), \boldsymbol{t})$.
This means that in the abduction step, the learned model recovers $g(U)$, but in the prediction step with the modified submodel, it first applies the inverse of $g$ to the recovered exogenous variable, and then the ground truth generation mechanism $f$.
Thus, all counterfactual predictions are equal to those of the ground truth generation mechanism $f$, which implies Definition \ref{def:gen}.

Next, we prove that Definition \ref{def:gen} implies Definition \ref{def:bijection} by contradiction.
Without loss of generality, suppose that parent is binary and can only take values $\boldsymbol{t_1}$ and $\boldsymbol{t_2}$.
Also, assume that the learned model recovers $\hat{U}$, where $\hat{U} = g_1(U)$ when the evidence is $\boldsymbol{t_1}$ and $\hat{U} = g_2(U)$ when evidence is $\boldsymbol{t_2}$.
This means that the learned generation mechanism is 
\begin{equation*}
\hat{f}(\hat{U}, \boldsymbol{T}) = 
\begin{cases} 
f\big(g_1^{-1}(\hat{U}), \boldsymbol{T}\big), &\boldsymbol{T} = \boldsymbol{t_1}\\
f\big(g_2^{-1}(\hat{U}), \boldsymbol{T}\big), &\boldsymbol{T} = \boldsymbol{t_0}          
\end{cases}
\end{equation*}
Now consider the following counterfactual query: What would have been the counterfactual value $y'$ of $Y$ had we done $\boldsymbol{t_1}$ given the fact that it was $y$ with $\boldsymbol{t_0}$?
This model will return $f(g_1^{-1}(g_0(U)), \boldsymbol{t_1})$ in response, which can cover a range of different values for arbitrary chosen pairs of $g_1(\cdot)$ and $g_2(\cdot)$.
This concludes the proof by contradiction.
\end{proof}

\subsection{Counterfactual Identifiability in Single Dimensional Bijective Generation Mechanisms}
\label{sub:identifiable}
\begin{theorem}
\label{theo:bijective_identification}
The bijective generation mechanism of $Y$ from its parents $\boldsymbol{T}$ is counterfactually identifiable if $Y$ is a 1-dimensional random variable with a differentiable and strictly increasing \gls{cdf}, and also the ground truth generation mechanism $\big(f(\cdot, \boldsymbol{t})\big)$ is differentiable and strictly increasing (decreasing) for all values of $\boldsymbol{t}$.
\end{theorem}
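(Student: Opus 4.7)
The plan is to reduce this theorem to the equivalence established just above between \Cref{def:gen} and \Cref{def:bijection} for bijective mechanisms. The reduction proceeds by exhibiting an explicit inverse-\gls{cdf} representation of every bijective \gls{dscm} that matches the observational distribution, and then reading off the $\boldsymbol{T}$-independent indeterminacy between any two such \gls{dscm}s.

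First, I would exploit the fact that a differentiable one-dimensional bijection is strictly monotonic. For any learned bijective mechanism $f_\theta(\cdot, \boldsymbol{t})$, I would write $F_{Y|\boldsymbol{T}}(y|\boldsymbol{t}) = \Pr\bigl(f_\theta(U,\boldsymbol{t}) \leq y\bigr)$ and invert using monotonicity in $u$ to obtain the closed form
\[ f_\theta(u, \boldsymbol{t}) = F_{Y|\boldsymbol{T}}^{-1}\bigl(F_{U;\theta}(u) \mid \boldsymbol{t}\bigr) \]
in the strictly-increasing case, with the analogous form using $1 - F_{U;\theta}(u)$ in the strictly-decreasing case. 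The hypothesis that $F_{Y|\boldsymbol{T}}(\cdot|\boldsymbol{t})$ is differentiable and strictly increasing guarantees that this quantile inverse exists and is uniquely defined.

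Next, I would take two arbitrary learned bijective \gls{dscm}s $f_{\theta_1}, f_{\theta_2}$ that both match $p_{Y|\boldsymbol{T}}$, possibly with different exogenous prior CDFs $F_{U;\theta_1}, F_{U;\theta_2}$. Substituting each into the inverse-\gls{cdf} formula gives $f_{\theta_2}(u, \boldsymbol{t}) = f_{\theta_1}(g(u), \boldsymbol{t})$ where $g(u) = F_{U;\theta_1}^{-1}(F_{U;\theta_2}(u))$ in the same-direction case, and $g(u) = F_{U;\theta_1}^{-1}(1 - F_{U;\theta_2}(u))$ in the opposite-direction case. In both cases $g$ is a strictly monotonic invertible function that does not depend on $\boldsymbol{T}$. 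This is precisely the condition of \Cref{def:bijection}, so the preceding equivalence theorem delivers identical counterfactual distributions for every counterfactual query, which is \Cref{def:gen}.

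The hard part, I expect, will be handling the direction of monotonicity cleanly. A differentiable bijection $f_\theta(\cdot, \boldsymbol{t})$ has a sign of $\partial f_\theta/\partial u$ that is constant in $u$ for each fixed $\boldsymbol{t}$, but in principle the sign could flip across different values of $\boldsymbol{t}$; such a pathological model would require a $\boldsymbol{t}$-dependent relating map and thus fail \Cref{def:bijection}. Ruling this out needs either joint continuity of $f_\theta$ in $(u,\boldsymbol{t})$ over a connected parent domain, or the mild additional assumption that the architecture is monotonic-in-$u$ uniformly in $\boldsymbol{t}$ (as is standard for autoregressive and monotonic normalizing flows). Once this subtlety is dispensed with, the remaining steps are routine quantile-transform bookkeeping.
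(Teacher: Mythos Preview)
Your argument is correct and lands on exactly the same indeterminacy map as the paper, $g = F_{U;\theta_1}^{-1}\circ F_{U;\theta_2}$ (the paper writes it $k_1^{-1}\circ k_2$), but the route is genuinely more direct. The paper first pushes both exogenous variables to uniforms $Z_1,Z_2$ via their CDFs, then applies the density change-of-variable formula to the composite map $s(\cdot,\boldsymbol{t})=k_2\circ f_{\theta_2}^{-1}(\cdot,\boldsymbol{t})\circ f_{\theta_1}(\cdot,\boldsymbol{t})\circ k_1^{-1}$, obtains $\partial s/\partial z_1\equiv 1$ from the fact that both $Z_i$ are $\mathrm{Unif}(0,1)$ and independent of $\boldsymbol{T}$, integrates, and kills the integration constant by matching means. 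You instead observe that any strictly increasing mechanism matching $p_{Y|\boldsymbol{T}}$ must factor as $f_\theta(u,\boldsymbol{t})=F_{Y|\boldsymbol{T}}^{-1}(F_{U;\theta}(u)\mid\boldsymbol{t})$, so the $\boldsymbol{T}$-dependence of any two such mechanisms is carried by the \emph{same} conditional quantile function and cancels immediately. This is cleaner: it works at the level of CDFs rather than densities, so the differentiability hypotheses on $f_\theta$ are not really used, and there is no integration-constant bookkeeping. The concern you flag about the monotonicity direction possibly flipping across $\boldsymbol{t}$ is legitimate and is glossed over in the paper's proof as well (it simply asserts that $f_{\theta_1}(\cdot,\boldsymbol{T})$ and $f_{\theta_2}^{-1}(\cdot,\boldsymbol{T})$ are strictly increasing); your proposed resolution via an architectural uniform-monotonicity assumption is the natural one and matches how conditional normalizing flows are built in practice.
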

\begin{proof}
% Define $g_\theta(y, t)$ as the inference mechanism, which does the inverse of $f_\theta(\epsilon, t)$.
Let $U_1 = f^{-1}_{\theta_1}(Y, \boldsymbol{T})$ and $U_2 = f^{-1}_{\theta_2}(Y, \boldsymbol{T})$. 
Let  $k_1(\cdot) = p_{U_1}(U_1 \le \cdot)$ and $k_2(\cdot) = p_{U_2}(U_2 \le \cdot)$ be the \gls{cdf}s of $U_1$ and $U_2$, respectively.
We use \gls{cdf}s for transforming $U_1$ and $U_2$ to $Unif(0, 1)$.
Let $Z_1 = k_1(U_1)$ and $Z_2 = k_2(U_2)$. 
Due to Probability Integral transform we have $Z_1, Z_2 \sim Unif(0, 1)$. 
Furthermore $Z_2 = s(Z_1, \boldsymbol{T})$ where $s(\cdot, \boldsymbol{T}) = k_2(\cdot) \circ f^{-1}_{\theta_2}(\cdot, \boldsymbol{T}) \circ f_{\theta_1}(\cdot, \boldsymbol{T}) \circ k_1^{-1}(\cdot)$. 
Because $k_2(\cdot)$, $f^{-1}_{\theta_2}(\cdot, \boldsymbol{T})$, $f_{\theta_1}(\cdot, \boldsymbol{T})$, and $k_1^{-1}(\cdot)$ are all diffferentiable and strictly increasing, $s(\cdot, \boldsymbol{T})$ is strictly increasing and differentiable for all realizations of $\boldsymbol{T}$. 
Using the change of variable formula, we have $\forall z_1, \boldsymbol{t}: p_{Z_1|\boldsymbol{T}}(z_1|\boldsymbol{t}) = p_{Z_2|\boldsymbol{T}}(s(z_1, \boldsymbol{t})|\boldsymbol{t}) \cdot |\frac{\partial s(z_1, \boldsymbol{t})}{\partial z_1}|$. 
Since $U_1$ and $\boldsymbol{T}$ are statistically independent (no unobserved confounding assumption), and because $Z_1$ is a deterministic function of $U_1$, we conclude independence of $Z_1$ and $\boldsymbol{T}$. 
By similar argument, $Z_2$ and $\boldsymbol{T}$ are also statistically independent. 
Thus $\forall z_1, \boldsymbol{t}: p_{Z_1}(z_1) = p_{Z_2}(s(z_1, \boldsymbol{t}))  \cdot |\frac{\partial s(z_1, \boldsymbol{t})}{\partial z_1}| \rightarrow \forall z_1, \boldsymbol{t}: \frac{\partial s(z_1, \boldsymbol{t})}{\partial z_1} = 1 \rightarrow \forall \boldsymbol{t}: z_2 = z_1 + H(\boldsymbol{t})$, where $H(\cdot)$ is an arbitrary function. 
$Z_1, Z_2 \sim Unif(0, 1)$ and are statistically independent of $\boldsymbol{T}$, so taking expectation of both sides with respect to the joint distribution of $(Z_1, Z_2)$ results in $\forall \boldsymbol{t}: H(\boldsymbol{t}) = 0$ and hence $Z_1 = Z_2$. 
Now we transform $Z_1$ and $Z_2$ back to $U_1$ and $U_2$ using the inverse \gls{cdf}s: $k_1(U_1) = k_2(U_2)$. 
As  $k_1(\cdot)$ is strictly increasing, it is invertible and $U_1 = k_1^{-1}(k_2(U_2))$. 
This concludes the proof with the nonlinear invertible indeterminacy $g(\cdot)$ being $k_1^{-1}(\cdot) \circ k_2(\cdot)$.
\end{proof}

Note that \Cref{theo:bijective_identification} was first proposed by~\cite{rlcf}, with a different proof.
Also note that counterfactual identifiability of nonlinear additive noise \gls{scm}s~(\cite{deci,elements}), where $Y = f(\boldsymbol{T}) + U$, is implied by \Cref{theo:bijective_identification}.

\subsection{Counterfactual Non-identifiability of Generation Mechanisms with Multi-dimensional Exogenous Variables, An Impossibility Result}
\label{sub:unidentifiability}
Here, we provide an impossibility result, in the form of a general counterexample, for counterfactual identifiability of general generation mechanisms with multi-dimensional exogenous variables, from observational data.

Suppose that we want to learn the generation mechanism for the generation of variable $Y$ in the causal \gls{dag} from its direct causal parents $\boldsymbol{T}$ and corresponding exogenous variable $U$, where $U$ is multi-dimensional.
As described in \Cref{sub:dscm}, similar to prior work, we use conditional generative models to learn this generation mechanism from observational data.
Without loss of generality, suppose $U$ is two dimensional, i.e., $U = (U^1, U^2)$.
Conditional generative models considered here, e.g., \gls{gan}, \gls{nf}, \gls{vae}, etc., sample a latent variable from a tractable distribution, and map it to the observed conditional distribution.
A common choice for the distribution of latent variables is the standard isotropic Gaussian distribution where each dimension of the latent variable is sampled independently from a standard normal distribution.
Even if a different distribution is used for the latent variables, we can still transform this distribution to a standard isotropic Gaussian, and think of the conditional generative transform as the combination of the distribution transform and the original generative transform.
Therefore, without loss of generality, we assume the latent variables are distributed according to a standard isotropic Gaussian.

Let $f$ be the ground truth generation mechanism, where $Y = f(U, \boldsymbol{T})$ and $U$ is sampled from a two-dimensional standard isotropic Gaussian distribution.
We will use $f$ to construct a different generation mechanism $f'$ which fits the observed distribution equally well but provides different counterfactual estimates.

Let us split the domain of $\boldsymbol{T}$ into sub-domains $A$ and $B$.
For $\boldsymbol{T}$ in $A$, we define $f'$ to be the same as the ground truth generation mechanism $f$.
However, for $\boldsymbol{T}$ in $B$, we define $f'$ as first applying an arbitrary rotation to $U$, and then applying the ground truth generation mechanism $f$, i.e.,
\begin{equation*}
f'(U, \boldsymbol{T}) = 
\begin{cases} 
f(U, \boldsymbol{T}), &\forall \boldsymbol{T} \in A\\
f(R(U), \boldsymbol{T}), &\forall \boldsymbol{T} \in B          
\end{cases}
\end{equation*}
where $R(\cdot)$ is an arbitrary rotation.

Since the standard isotropic Gaussian distribution is invariant to rotations, $f$ and $f'$ produce exactly the same conditional distributions, and are valid solutions for the \gls{dscm}'s optimization problem.
However, for counterfactual queries with evidence in $A$ and intervention in $B$ (or the other way around), they generate different counterfactual estimates unless $f$ is rotation invariant.
If $f$ is rotation invariant, it can in fact be parameterized by a single dimension, and two-dimensional $U$ is redundant.
Note that this result holds even if the joint distribution of parents and exogenous variables in training is the same as the joint distribution of counterfactual parents and exogenous variables in counterfactual queries, and we assume access to infinite amount of observational data.
Also note that this result is not limited to bijective generation mechanisms (\cref{sub:bijective}), as we do not assume existence of a deterministic inference mechanism in this section's discussion.

This impossibility result has similarities to, and is partially inspired by the non-identifiability results in unsupervised representation learning~(\cite{locatello2019challenging}) and non-linear \gls{ica}~(\cite{hyvarinen1999nonlinear}), and underscores the need for parametric assumptions about the functional form of generation mechanisms with multi-dimensional exogenous variables for making them counterfactually identifiable.
In the next section, we present a methodology for estimating errors of counterfactual estimates given a particular set of assumptions.
\section{A practical approach}
\label{sec:method}

As we saw in \Cref{sec:motiv,sub:unidentifiability}, even in presence of an infinite amount of data and assuming perfect optimization, learning from observational datasets may result in faulty \gls{dscm}s for the task of counterfactual inference.
In \Cref{sub:identifiable}, we characterized a small set of bijective generation mechanisms for which we have counterfactual identifiability with guarantees.
In \Cref{sub:unidentifiability}, we proved an impossibility result for the counterfactual identifiability of general generation mechanisms with multi dimensional exogenous variables. 
However, adding more inductive biases in terms of parametric assumptions about the true \gls{scm} may result in counterfactual identifiability.

Proving counterfactual identifiability for each set of assumptions is a cumbersome process.
Also, exact counterfactual identifiability is often too strong, e.g., in cases where low counterfactual error is tolerable by practitioners.
To overcome these challenges, we propose a computational analysis of counterfactual identifiability with \gls{dscm}s.
Given a set of parametric and functional form assumptions, our method can calculate different error metrics for counterfactual estimates.

\subsection{Method}

\subsubsection{The First Step}
As explained in \Cref{sec:background} and similar to \cite{Pawlowski2020DeepSC}, for each variable $Y$ in the causal \gls{dag}, we train a conditional generative model and a conditional inference mechanism with conditions being its causal parents $\boldsymbol{T}$.
In this step, we can add all the inductive biases in the architecture of the conditional generative model, e.g., monotonicity~(\cite{monotone1,monotone2,i-resnet}), specific functional forms and parametric assumptions, smoothness~(\cite{spec-norm,gradient-penalty,wgan}), etc.
The loss function for training the conditional generative model depends on the type of generative model, e.g., \gls{elbo} for \gls{vae}s, discrimination loss for \gls{gan}s, log-likelihood for \gls{nf}s, etc.
We call this loss function $L_{\text{generation}}$.
Intuitively, these models and objective functions learn the conditional distribution of $Y$ given each realization of its parents $\boldsymbol{t}$, i.e., $p_{Y|\boldsymbol{T}}(\cdot|\boldsymbol{T})$.
\begin{equation*}
    \theta_1 = \argmin_{\theta} (L_{\text{generation}}(\theta))
\end{equation*}
The model resulting from this step is a typical instance that a \gls{dscm} training would produce.

\subsubsection{The Second Step}
For each endogenous variable in the causal DAG, we train a second conditional generative model and inference mechanism, like the previous step, but with a different loss function.
This time, we attempt to fit the observational conditional distribution $p_{Y|\boldsymbol{T}}(\cdot|\boldsymbol{T})$ as well as the first model, but also to maximize disagreement with the counterfactual estimates of the first model.
We can measure this disagreement in terms of any error metric that we care about, e.g., KL-divergence.
\begin{equation*}
    L_{\text{disagreement}} = -E_{\boldsymbol{T_1}, \boldsymbol{T_2}, Y_1 \sim \mathbb{S}} \Big[D\big(p(Y_{\boldsymbol{T_2}}|\boldsymbol{T_1}, Y_1;\theta_1), p(Y_{\boldsymbol{T_2}}|\boldsymbol{T_1}, Y_1;\theta_2)\big)\Big]
\end{equation*}
$\mathbb{S}$ can be any desired distribution.
In our experiments, we choose the intervention and evidence independently from the empirical distribution of observations.

The second model, when trained to convergence, will be an example of a worst-case possible ground truth \gls{scm} in terms of counterfactual accuracy metric (disagreement loss) for the first model, which respects all the parametric assumptions.
\begin{align}
\label{eqn:constrained}
&\theta_2 = \argmin_{\theta} (L_{\text{disagreement}}(\theta))\\ 
\textrm{subject to} \quad   &L_{\text{generation}}(\theta) = L_{\text{generation}}(\theta_1) \nonumber
\end{align}
If the worst-case counterfactual error obtained in step two is negligible enough for the practitioner, they can move on with their modeling with the trained \gls{dscm}.
On the other hand, if the counterfactual estimation error is not sufficiently small, it signals the need for more inductive biases, parametric assumptions, etc. for this endogenous variable.

To make \Cref{eqn:constrained} amenable to gradient based optimization, we use the Lagrange multiplier to construct $L_{\text{second}}$:
\begin{equation}
\label{eqn:lagr}
    L_{\text{second}}(\theta) = L_{\text{disagreement}}(\theta) + \lambda L_{\text{generation}}(\theta)
\end{equation}
Note that we removed $\lambda L_{\text{generation}}(\theta_1)$ as it is a constant which does not depend on $\theta$.  
% \noindent \textbf{Disagreement in image counterfactual generation:}
% In some domains such as image counterfactual generation~(\cite{Dash2022EvaluatingAM,Sanchez2022DiffusionCM}), coming up with disagreement loss may not be straightforward, e.g., 
% In these settings, we suggest a loss function motivated by the definition of counterfactual identifiability. 
% Train the second conditional generative model in a way that mapping from the inferred noise by the first model to the inferred noise by the second model depends as much as possible on the value of the parents.
% This can be practically realized by training a seperate regression model that tries to predict the value of parents $(x)$ based on the input pair $\Big(\epsilon_{\theta_1}(x, y), \epsilon_{\theta_2}(x, y)\Big)$.
% If it succeeds, it means that the mapping in the latent space is parent dependent.
% On the contrary, if the mapping in the latent space is not parent dependent, it means that latents captured by the second model are those captured by the first, upto an indeterminacy, which results in counterfactual identifiability.
\section{Evaluation}

In this section, we evaluate the practical method presented in \Cref{sec:method} in an identifiable case directly adopted from \cite{Pawlowski2020DeepSC}, and a non-identifiable case that we create ourselves.

\subsection{An Identifiable Case}
\label{sub:eval_identifiable}

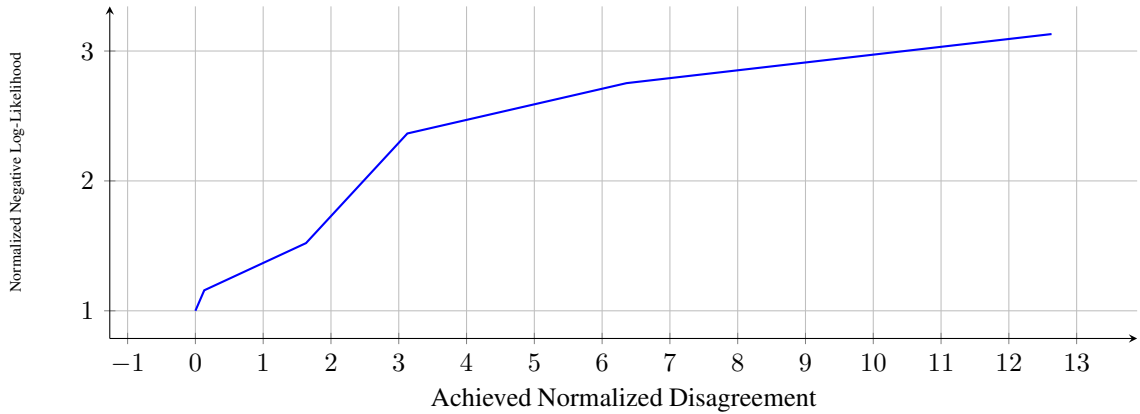
\begin{figure}[tb]
    \centering
    \begin{tikzpicture}
        \begin{axis}[height=6cm,
                     width=\linewidth,
                     axis lines=left,
                     xlabel={Achieved Normalized Disagreement},
                     ylabel={\tiny Normalized Negative Log-Likelihood},
                     font=\small,
                     enlargelimits=true,
                     xmin=0,
                     % ymax=99.98, 
                     % ymin=0.678,
                     % ytick={0.68, 0.69},
                     % ylabel near ticks,
                     % xlabel near ticks,
                     grid=both
                     ]
            \addplot[solid, blue, thick, mark=none] coordinates{
            % (0.0, 0.7251) (0.1307, 0.8396) (1.632, 1.103) (3.127, 1.715) (6.36, 1.996) (12.63, 2.27)
            (0.0, 1) (0.1307, 1.1579092539) (1.632, 1.5211694939) (3.127, 2.3651910081) (6.36, 2.7527237622) (12.63, 3.1306026755)
            };
        \end{axis}
    \end{tikzpicture}
  \caption{Normalized negative log-likelihood increases consistently with the increase in the achieved normalized disagreement.}
    \label{fig:result:flow}
\end{figure}

In this part, we use ``case study 1'' in \cite{Pawlowski2020DeepSC} as a counterfactually identifiable example to evaluate the practical method presented in \Cref{sec:method}.
In this example, intensity ($Y$) and thickness ($T$) are generated according to the following \gls{scm}:
\begin{align}
    T &= 0.5 + U_T \,, & U_T &\sim \Gamma(10, 5) \, \nonumber\\ 
    Y &= f(U_Y, T) = 191 \cdot \sigma(0.5 \cdot U_Y + 2 \cdot T - 5) + 64 \,, & U_Y &\sim \mathcal{N}(0, 1)
    \label{eqn:intensity}
\end{align}
where $\sigma(\cdot)$ is the logistic sigmoid.

We focus on learning the generation mechanism for intensity ($Y$).
\Cref{eqn:intensity} is a bijective single dimensional generation mechanism that satisfies all assumptions of \Cref{theo:bijective_identification} and is therefore counterfactually identifiable.
It was shown in \cite{Pawlowski2020DeepSC} that using a conditional \gls{nf} can learn this generation mechanism, and estimate counterfactuals with high accuracy.
For the first step of our method, we exactly follow the methodology and setup of \cite{Pawlowski2020DeepSC}:
We assume that we know the bijective functional form of the generation mechanism, i.e., the mapping from $U_Y$ to $Y$ is a combination of a conditional affine transform, sigmoid transform, and finally a normalization transform.
Only the parameters of the conditional affine transform depend on the value of $T$.
We use a learnable linear transformation to learn the location and scale of this conditional affine transform based on values of $T$.
It is worth emphasizing that $L_{\text{generation}}$ is equal to negative log-likelihood of the samples when using a \gls{nf}.

For the second step, we use the same setup for learning the \gls{nf}, but we change the loss function to \Cref{eqn:lagr}.
For the disagreement loss, we use the \gls{mse} of the estimated counterfactuals compared with the conuterfactuals estimated by the first model.
We normalize this \gls{mse} with the \gls{mse} of predicted counterfactuals from the first model compared with the evidence in the counterfactual query, when the intervention and evidence are both selected randomly from the observed thicknesses.

We found \Cref{eqn:lagr} to be an unstable loss function as $L_{\text{disagreement}}$ can grow unbounded.
As a result, we optimize the following loss function while sweeping the range of ``\emph{disagreement threshold}'' as a hyper-parameter:
\begin{equation}
\label{eqn:threshold}
    L_{\text{second}}^* = L_{\text{generation}} + (L_{\text{disagreement}}-\textrm{disagreement threshold})^2
\end{equation}

As we see in \Cref{fig:result:flow}, the negative log-likelihood achieved by the second \gls{nf} starts increasing immediately as we encourage disagreement.
This means that in this case, no \gls{nf} exists that achieves the same log-likelihood as the first \gls{nf}, but gives different counterfactual estimates. 
In other words, the practical method confirms counterfactual identifiability of \gls{dscm} using optimization, without going through identifiability proofs.

\subsection{A Non-identifiable Case}

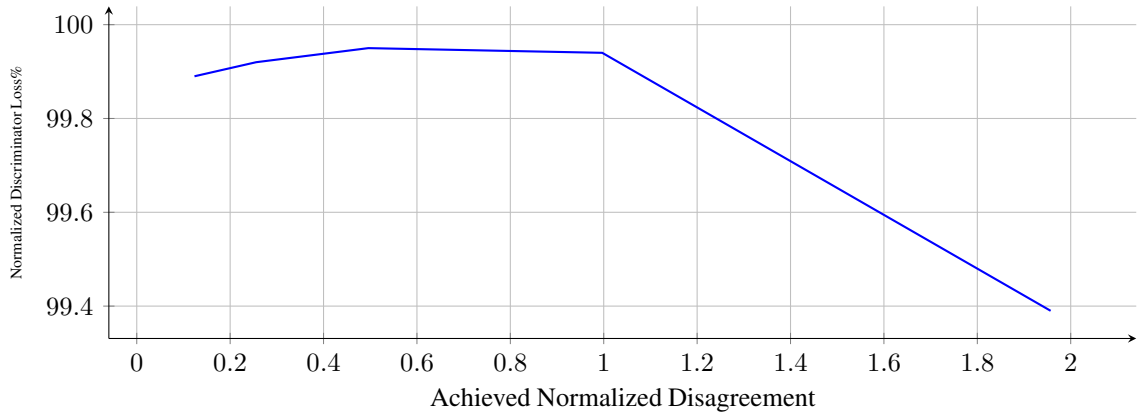
\begin{figure}[tb]
    \centering
    \begin{tikzpicture}
        \begin{axis}[height=6cm,
                     width=\linewidth,
                     axis lines=left,
                     xlabel={Achieved Normalized Disagreement},
                     ylabel={\tiny Normalized Discriminator Loss\%},
                     font=\small,
                     enlargelimits=true,
                     ymax=99.98, 
                     % ymin=0.678,
                     % ytick={0.68, 0.69},
                     % ylabel near ticks,
                     % xlabel near ticks,
                     grid=both
                     ]
            \addplot[solid, blue, thick, mark=none] coordinates{
            (0.1236, 99.89) (0.2558, 99.92) (0.496, 99.95) (0.9968, 99.94) (1.957, 99.39)
            };
        \end{axis}
    \end{tikzpicture}
  \caption{Normalized discriminator loss is high until a certain achieved normalized disagreement (around 1), which is the counterfactual estimation worst-case error.}
    \label{fig:result:discriminator}
\end{figure}

To evaluate our method in non-identifiable case, we modify \Cref{eqn:intensity} to create a new \gls{scm}.
We change $U_Y$ to a two-dimensional random variable where each dimension is sampled independently from $Unif(-\sqrt3, \sqrt3)$, and $Y$ to a two-dimensional random variable.
To set the generation mechanism of the two-dimensional $Y$, we randomly initialize a fully connected neural network $f_{\theta^*}$ that takes as input $(U_Y, T)$ and gives as output the two-dimensional corresponding value of $Y$.
We finetune its weights in a way that the \gls{mse} of counterfactual estimates compared with the evidence, $E_{T, T', U_Y \sim Unif(-\sqrt3, \sqrt3)}\big[\big(f_{\theta^*}(U_Y, T) - f_{\theta^*}(U_Y, T')\big)^2\big]$, is close to $1$ (to normalize the disagreement loss in the second step).
We also train a separate deterministic inference network $(f^{-1}_{\theta^*}(\cdot, T))$.
This enforces $f_{\theta^*}$ to be a bijection.

Now we apply our method to this \gls{scm}.
For the first step, we train a conditional \gls{gan} to fit the conditional distribution $p_{Y|T}(\cdot|T)$ generated by $f_{\theta^*}$.
We assume that by prior knowledge, we know that the generation mechanism of $Y$ is a bijection.
To inject this inductive bias into the training process of the conditional \gls{gan}, we train a separate deterministic inference mechanism along with the conditional \gls{gan}.
Note that $L_{\text{generation}}$ is equal to the negataive discrimination loss in this case.

For the second step of the method which trains a second conditional GAN that disagrees with the first one in counterfactual estimations. We define the disagreement loss as the \gls{mse} of counterfactual predictions of the first and the second model.
We find \Cref{eqn:lagr} to be an unstable loss function as $L_{\text{disagreement}}$ can grow unbounded.
Therefore, similar to what we did in \Cref{sub:eval_identifiable}, we use \Cref{eqn:threshold} as the loss function with a few different thresholds.

\Cref{fig:result:discriminator} shows the discriminator loss (normalized by its theoretical upper bound), which is an indicator of how well \gls{dscm} captures the conditional distribution $p_{Y|T}(\cdot|T)$, vs the achieved normalized disagreement.
As you can see from the trend, distribution quality keeps being high until a certain point (around normalized threshold of 1) where it starts to fall.
This means that this \gls{scm} is not counterfactually identifiable, as we expected from \Cref{sub:unidentifiability}.
Additionally, this trend also demonstrates that the average (over evidence and counterfactual parents) worst case (over ground truth \gls{scm}s) normalized counterfactual error is $\sim 1$.

Practitioners can decide whether this counterfactual error is sufficiently small for their use case.
If yes, they can use the \gls{dscm} for counterfactual inference.
If not, they should add more inductive biases such as functional forms, parametric assumptions, etc. until the counterfactual error meets their criterion.
\section{Concluding Remarks}

A recent line of work~(\cite{Dash2022EvaluatingAM,Khemakhem2021CausalAF,Pawlowski2020DeepSC,Sanchez2022DiffusionCM}) suggests use of conditional generative models for learning \gls{dscm}s, which can further be utilized for the task of counterfactual inference.
In this work, we demonstrated how this approach can fail in the absence of counterfactual identifiability.
We characterized the set of bijective single dimensional generation mechanism as counterfactually identifiable, and also proved an impossibility result for counterfactual identifiability of general generation mechanisms with multi-dimensional exogenous variables.
Furthermore, we proposed a practical algorithm that can help a practitioner decide whether or not they want to use deep models in their specific setting with their set of assumptions, by giving informative bounds on counterfactual estimate errors of \gls{dscm}s.
We evaluated this method in an identifiable and a non-identifiable setting.

\newpage

% Acknowledgments---Will not appear in anonymized version
% \acks{We thank a bunch of people and funding agency.}

\bibliography{clear2023}

\begin{thebibliography}{66}
\providecommand{\natexlab}[1]{#1}
\providecommand{\url}[1]{\texttt{#1}}
\expandafter\ifx\csname urlstyle\endcsname\relax
  \providecommand{\doi}[1]{doi: #1}\else
  \providecommand{\doi}{doi: \begingroup \urlstyle{rm}\Url}\fi

\bibitem[Agarwal et~al.(2020)Agarwal, Shah, and Shen]{si}
Anish Agarwal, Devavrat Shah, and Dennis Shen.
\newblock Synthetic interventions.
\newblock \emph{arXiv preprint arXiv:2006.07691}, 2020.

\bibitem[Alomar et~al.(2022)Alomar, Hamadanian, Nasr-Esfahany, Agarwal,
  Alizadeh, and Shah]{causalsim}
Abdullah Alomar, Pouya Hamadanian, Arash Nasr-Esfahany, Anish Agarwal, Mohammad
  Alizadeh, and Devavrat Shah.
\newblock Causalsim: A causal inference framework for unbiased trace-driven
  simulation.
\newblock 2022.

\bibitem[Arjovsky et~al.(2017)Arjovsky, Chintala, and Bottou]{wgan}
Martin Arjovsky, Soumith Chintala, and L{\'e}on Bottou.
\newblock {W}asserstein generative adversarial networks.
\newblock In Doina Precup and Yee~Whye Teh, editors, \emph{Proceedings of the
  34th International Conference on Machine Learning}, volume~70 of
  \emph{Proceedings of Machine Learning Research}, pages 214--223. PMLR, 06--11
  Aug 2017.
\newblock URL \url{https://proceedings.mlr.press/v70/arjovsky17a.html}.

\bibitem[Bareinboim et~al.(2022)Bareinboim, Correa, Ibeling, and Icard]{pch}
Elias Bareinboim, Juan~D. Correa, Duligur Ibeling, and Thomas Icard.
\newblock \emph{On Pearl’s Hierarchy and the Foundations of Causal
  Inference}, page 507–556.
\newblock Association for Computing Machinery, New York, NY, USA, 1 edition,
  2022.
\newblock ISBN 9781450395861.
\newblock URL \url{https://doi.org/10.1145/3501714.3501743}.

\bibitem[Behrmann et~al.(2019)Behrmann, Duvenaud, and Jacobsen]{i-resnet}
Jens Behrmann, David~Kristjanson Duvenaud, and J{\"o}rn-Henrik Jacobsen.
\newblock Invertible residual networks.
\newblock In \emph{ICML}, 2019.

\bibitem[Blei et~al.(2017)Blei, Kucukelbir, and McAuliffe]{vi1}
David~M Blei, Alp Kucukelbir, and Jon~D McAuliffe.
\newblock Variational inference: A review for statisticians.
\newblock \emph{Journal of the American statistical Association}, 112\penalty0
  (518):\penalty0 859--877, 2017.

\bibitem[Budhathoki et~al.(2022)Budhathoki, Minorics, Bloebaum, and
  Janzing]{root-cause}
Kailash Budhathoki, Lenon Minorics, Patrick Bloebaum, and Dominik Janzing.
\newblock Causal structure-based root cause analysis of outliers.
\newblock In \emph{International Conference on Machine Learning}, pages
  2357--2369. PMLR, 2022.

\bibitem[Correa and Bareinboim(2020{\natexlab{a}})]{soft-calculus}
Juan Correa and Elias Bareinboim.
\newblock A calculus for stochastic interventions: Causal effect identification
  and surrogate experiments.
\newblock In \emph{Proceedings of the AAAI conference on artificial
  intelligence}, volume~34, pages 10093--10100, 2020{\natexlab{a}}.

\bibitem[Correa and Bareinboim(2020{\natexlab{b}})]{soft2}
Juan Correa and Elias Bareinboim.
\newblock General transportability of soft interventions: Completeness results.
\newblock \emph{Advances in Neural Information Processing Systems},
  33:\penalty0 10902--10912, 2020{\natexlab{b}}.

\bibitem[Correa et~al.(2021)Correa, Lee, and Bareinboim]{cf-ident2}
Juan Correa, Sanghack Lee, and Elias Bareinboim.
\newblock Nested counterfactual identification from arbitrary surrogate
  experiments.
\newblock \emph{Advances in Neural Information Processing Systems},
  34:\penalty0 6856--6867, 2021.

\bibitem[Dash et~al.(2022)Dash, Balasubramanian, and
  Sharma]{Dash2022EvaluatingAM}
Saloni Dash, Vineeth~N. Balasubramanian, and Amit Sharma.
\newblock Evaluating and mitigating bias in image classifiers: A causal
  perspective using counterfactuals.
\newblock \emph{2022 IEEE/CVF Winter Conference on Applications of Computer
  Vision (WACV)}, pages 3879--3888, 2022.

\bibitem[Dumoulin et~al.(2016)Dumoulin, Belghazi, Poole, Mastropietro, Lamb,
  Arjovsky, and Courville]{ali}
Vincent Dumoulin, Ishmael Belghazi, Ben Poole, Olivier Mastropietro, Alex Lamb,
  Martin Arjovsky, and Aaron Courville.
\newblock Adversarially learned inference.
\newblock \emph{arXiv preprint arXiv:1606.00704}, 2016.

\bibitem[Durkan et~al.(2019)Durkan, Bekasov, Murray, and
  Papamakarios]{spline-flows}
Conor Durkan, Artur Bekasov, Iain Murray, and George Papamakarios.
\newblock Neural spline flows.
\newblock \emph{Advances in neural information processing systems}, 32, 2019.

\bibitem[Galhotra et~al.(2021)Galhotra, Pradhan, and Salimi]{explainability}
Sainyam Galhotra, Romila Pradhan, and Babak Salimi.
\newblock Explaining black-box algorithms using probabilistic contrastive
  counterfactuals.
\newblock In \emph{Proceedings of the 2021 International Conference on
  Management of Data}, pages 577--590, 2021.

\bibitem[Gan et~al.(2021)Gan, Liang, Dev, Lo, and Delimitrou]{sage}
Yu~Gan, Mingyu Liang, Sundar Dev, David Lo, and Christina Delimitrou.
\newblock Sage: Practical and scalable ml-driven performance debugging in
  microservices.
\newblock In \emph{Proceedings of the 26th ACM International Conference on
  Architectural Support for Programming Languages and Operating Systems},
  ASPLOS '21, page 135–151, New York, NY, USA, 2021. Association for
  Computing Machinery.
\newblock ISBN 9781450383172.
\newblock \doi{10.1145/3445814.3446700}.
\newblock URL \url{https://doi.org/10.1145/3445814.3446700}.

\bibitem[Geffner et~al.(2022)Geffner, Antor{\'a}n, Foster, Gong, Ma, Kıcıman,
  Sharma, Lamb, Kukla, Pawlowski, Allamanis, and Zhang]{deci}
Tomas Geffner, Javier Antor{\'a}n, Adam Foster, Wenbo Gong, Chao Ma, Emre
  Kıcıman, Ajay Sharma, A.~Lamb, Martin Kukla, Nick Pawlowski, Miltiadis
  Allamanis, and Cheng Zhang.
\newblock Deep end-to-end causal inference.
\newblock \emph{ArXiv}, abs/2202.02195, 2022.

\bibitem[Glymour et~al.(2016)Glymour, Pearl, and Jewell]{primer}
Madelyn Glymour, Judea Pearl, and Nicholas~P Jewell.
\newblock \emph{Causal inference in statistics: A primer}.
\newblock John Wiley \& Sons, 2016.

\bibitem[Goodfellow et~al.(2020)Goodfellow, Pouget-Abadie, Mirza, Xu,
  Warde-Farley, Ozair, Courville, and Bengio]{gan}
Ian Goodfellow, Jean Pouget-Abadie, Mehdi Mirza, Bing Xu, David Warde-Farley,
  Sherjil Ozair, Aaron Courville, and Yoshua Bengio.
\newblock Generative adversarial networks.
\newblock \emph{Communications of the ACM}, 63\penalty0 (11):\penalty0
  139--144, 2020.

\bibitem[Gulrajani et~al.(2017)Gulrajani, Ahmed, Arjovsky, Dumoulin, and
  Courville]{gradient-penalty}
Ishaan Gulrajani, Faruk Ahmed, Martin Arjovsky, Vincent Dumoulin, and Aaron~C
  Courville.
\newblock Improved training of wasserstein gans.
\newblock \emph{Advances in neural information processing systems}, 30, 2017.

\bibitem[Ho et~al.(2020)Ho, Jain, and Abbeel]{denoising}
Jonathan Ho, Ajay Jain, and Pieter Abbeel.
\newblock Denoising diffusion probabilistic models.
\newblock \emph{Advances in Neural Information Processing Systems},
  33:\penalty0 6840--6851, 2020.

\bibitem[Hoyer et~al.(2008)Hoyer, Janzing, Mooij, Peters, and
  Sch{\"o}lkopf]{anm2}
Patrik Hoyer, Dominik Janzing, Joris~M Mooij, Jonas Peters, and Bernhard
  Sch{\"o}lkopf.
\newblock Nonlinear causal discovery with additive noise models.
\newblock \emph{Advances in neural information processing systems}, 21, 2008.

\bibitem[Huang et~al.(2018)Huang, Krueger, Lacoste, and Courville]{flow4}
Chin-Wei Huang, David Krueger, Alexandre Lacoste, and Aaron Courville.
\newblock Neural autoregressive flows.
\newblock In \emph{International Conference on Machine Learning}, pages
  2078--2087. PMLR, 2018.

\bibitem[Hyv{\"a}rinen and Pajunen(1999)]{hyvarinen1999nonlinear}
Aapo Hyv{\"a}rinen and Petteri Pajunen.
\newblock Nonlinear independent component analysis: Existence and uniqueness
  results.
\newblock \emph{Neural networks}, 12\penalty0 (3):\penalty0 429--439, 1999.

\bibitem[Jaber et~al.(2020)Jaber, Kocaoglu, Shanmugam, and Bareinboim]{soft3}
Amin Jaber, Murat Kocaoglu, Karthikeyan Shanmugam, and Elias Bareinboim.
\newblock Causal discovery from soft interventions with unknown targets:
  Characterization and learning.
\newblock \emph{Advances in neural information processing systems},
  33:\penalty0 9551--9561, 2020.

\bibitem[Jaiswal et~al.(2018)Jaiswal, AbdAlmageed, Wu, and
  Natarajan]{bidirectional-gan}
Ayush Jaiswal, Wael AbdAlmageed, Yue Wu, and Premkumar Natarajan.
\newblock Bidirectional conditional generative adversarial networks.
\newblock In \emph{Asian Conference on Computer Vision}, pages 216--232.
  Springer, 2018.

\bibitem[Jordan et~al.(1999)Jordan, Ghahramani, Jaakkola, and Saul]{vi2}
Michael~I Jordan, Zoubin Ghahramani, Tommi~S Jaakkola, and Lawrence~K Saul.
\newblock An introduction to variational methods for graphical models.
\newblock \emph{Machine learning}, 37\penalty0 (2):\penalty0 183--233, 1999.

\bibitem[Khemakhem et~al.(2021)Khemakhem, Monti, Leech, and
  Hyv{\"a}rinen]{Khemakhem2021CausalAF}
Ilyes Khemakhem, Ricardo~Pio Monti, Robert Leech, and Aapo Hyv{\"a}rinen.
\newblock Causal autoregressive flows.
\newblock \emph{ArXiv}, abs/2011.02268, 2021.

\bibitem[Kingma and Welling(2014)]{vae}
Diederik~P. Kingma and Max Welling.
\newblock Auto-encoding variational bayes.
\newblock In Yoshua Bengio and Yann LeCun, editors, \emph{2nd International
  Conference on Learning Representations, {ICLR} 2014, Banff, AB, Canada, April
  14-16, 2014, Conference Track Proceedings}, 2014.
\newblock URL \url{http://arxiv.org/abs/1312.6114}.

\bibitem[Kingma et~al.(2016)Kingma, Salimans, Jozefowicz, Chen, Sutskever, and
  Welling]{flow3}
Durk~P Kingma, Tim Salimans, Rafal Jozefowicz, Xi~Chen, Ilya Sutskever, and Max
  Welling.
\newblock Improved variational inference with inverse autoregressive flow.
\newblock \emph{Advances in neural information processing systems}, 29, 2016.

\bibitem[Kobyzev et~al.(2019)Kobyzev, Prince, and Brubaker]{NF2}
Ivan Kobyzev, Simon Prince, and Marcus~A. Brubaker.
\newblock Normalizing flows: Introduction and ideas.
\newblock \emph{ArXiv}, abs/1908.09257, 2019.

\bibitem[Kusner et~al.(2017)Kusner, Loftus, Russell, and Silva]{fairness}
Matt~J Kusner, Joshua Loftus, Chris Russell, and Ricardo Silva.
\newblock Counterfactual fairness.
\newblock \emph{Advances in neural information processing systems}, 30, 2017.

\bibitem[Lagnado et~al.(2013)Lagnado, Gerstenberg, and Zultan]{legal}
David~A Lagnado, Tobias Gerstenberg, and Ro'i Zultan.
\newblock Causal responsibility and counterfactuals.
\newblock \emph{Cognitive science}, 37\penalty0 (6):\penalty0 1036--1073, 2013.

\bibitem[Lang(2005)]{monotone2}
Bernhard Lang.
\newblock Monotonic multi-layer perceptron networks as universal approximators.
\newblock In \emph{International conference on artificial neural networks},
  pages 31--37. Springer, 2005.

\bibitem[Lasserre(2001)]{polynomial1}
Jean~B Lasserre.
\newblock Global optimization with polynomials and the problem of moments.
\newblock \emph{SIAM Journal on optimization}, 11\penalty0 (3):\penalty0
  796--817, 2001.

\bibitem[Li(2021)]{ads}
Ang Li.
\newblock \emph{Unit selection based on counterfactual logic}.
\newblock University of California, Los Angeles, 2021.

\bibitem[Locatello et~al.(2019)Locatello, Bauer, Lucic, Raetsch, Gelly,
  Sch{\"o}lkopf, and Bachem]{locatello2019challenging}
Francesco Locatello, Stefan Bauer, Mario Lucic, Gunnar Raetsch, Sylvain Gelly,
  Bernhard Sch{\"o}lkopf, and Olivier Bachem.
\newblock Challenging common assumptions in the unsupervised learning of
  disentangled representations.
\newblock In \emph{international conference on machine learning}, pages
  4114--4124. PMLR, 2019.

\bibitem[Lorberbom et~al.(2021)Lorberbom, Johnson, Maddison, Tarlow, and
  Hazan]{disc3}
Guy Lorberbom, Daniel~D Johnson, Chris~J Maddison, Daniel Tarlow, and Tamir
  Hazan.
\newblock Learning generalized gumbel-max causal mechanisms.
\newblock \emph{Advances in Neural Information Processing Systems},
  34:\penalty0 26792--26803, 2021.

\bibitem[Lu et~al.(2020)Lu, Huang, Wang, Hern{\'a}ndez-Lobato, Zhang, and
  Sch{\"o}lkopf]{rlcf}
Chaochao Lu, Biwei Huang, Ke~Wang, Jos{\'e}~Miguel Hern{\'a}ndez-Lobato, Kun
  Zhang, and Bernhard Sch{\"o}lkopf.
\newblock Sample-efficient reinforcement learning via counterfactual-based data
  augmentation.
\newblock \emph{arXiv preprint arXiv:2012.09092}, 2020.

\bibitem[Mesnard et~al.(2021)Mesnard, Weber, Viola, Thakoor, Saade,
  Harutyunyan, Dabney, Stepleton, Heess, Guez, et~al.]{credit}
Thomas Mesnard, Theophane Weber, Fabio Viola, Shantanu Thakoor, Alaa Saade,
  Anna Harutyunyan, Will Dabney, Thomas~S Stepleton, Nicolas Heess, Arthur
  Guez, et~al.
\newblock Counterfactual credit assignment in model-free reinforcement
  learning.
\newblock In \emph{International Conference on Machine Learning}, pages
  7654--7664. PMLR, 2021.

\bibitem[Miyato et~al.(2018)Miyato, Kataoka, Koyama, and Yoshida]{spec-norm}
Takeru Miyato, Toshiki Kataoka, Masanori Koyama, and Yuichi Yoshida.
\newblock Spectral normalization for generative adversarial networks.
\newblock In \emph{International Conference on Learning Representations}, 2018.

\bibitem[Mohamed and Lakshminarayanan(2016)]{implicit}
Shakir Mohamed and Balaji Lakshminarayanan.
\newblock Learning in implicit generative models.
\newblock \emph{ArXiv}, abs/1610.03483, 2016.

\bibitem[Oberst and Sontag(2019)]{disc2}
Michael Oberst and David Sontag.
\newblock Counterfactual off-policy evaluation with gumbel-max structural
  causal models.
\newblock In \emph{International Conference on Machine Learning}, pages
  4881--4890. PMLR, 2019.

\bibitem[Papamakarios et~al.(2021)Papamakarios, Nalisnick, Rezende, Mohamed,
  and Lakshminarayanan]{NF1}
George Papamakarios, Eric~T. Nalisnick, Danilo~Jimenez Rezende, Shakir Mohamed,
  and Balaji Lakshminarayanan.
\newblock Normalizing flows for probabilistic modeling and inference.
\newblock \emph{J. Mach. Learn. Res.}, 22:\penalty0 57:1--57:64, 2021.

\bibitem[Parrilo(2003)]{polynomial2}
Pablo~A Parrilo.
\newblock Semidefinite programming relaxations for semialgebraic problems.
\newblock \emph{Mathematical programming}, 96\penalty0 (2):\penalty0 293--320,
  2003.

\bibitem[Pawlowski et~al.(2020)Pawlowski, Coelho~de Castro, and
  Glocker]{Pawlowski2020DeepSC}
Nick Pawlowski, Daniel Coelho~de Castro, and Ben Glocker.
\newblock Deep structural causal models for tractable counterfactual inference.
\newblock \emph{Advances in Neural Information Processing Systems},
  33:\penalty0 857--869, 2020.

\bibitem[Pearl(2009)]{causality-book}
Judea Pearl.
\newblock \emph{Causality}.
\newblock Cambridge university press, 2009.

\bibitem[Pearl and Mackenzie(2018)]{why}
Judea Pearl and Dana Mackenzie.
\newblock \emph{The book of why: the new science of cause and effect}.
\newblock Basic books, 2018.

\bibitem[Peters et~al.(2010)Peters, Janzing, and Sch{\"o}lkopf]{anm1}
Jonas Peters, Dominik Janzing, and Bernhard Sch{\"o}lkopf.
\newblock Identifying cause and effect on discrete data using additive noise
  models.
\newblock In \emph{Proceedings of the thirteenth international conference on
  artificial intelligence and statistics}, pages 597--604. JMLR Workshop and
  Conference Proceedings, 2010.

\bibitem[Peters et~al.(2014)Peters, Mooij, Janzing, and Sch{\"o}lkopf]{anm3}
Jonas Peters, Joris~M Mooij, Dominik Janzing, and Bernhard Sch{\"o}lkopf.
\newblock Causal discovery with continuous additive noise models.
\newblock 2014.

\bibitem[Peters et~al.(2017)Peters, Janzing, and Sch{\"o}lkopf]{elements}
Jonas Peters, Dominik Janzing, and Bernhard Sch{\"o}lkopf.
\newblock \emph{Elements of causal inference: foundations and learning
  algorithms}.
\newblock 2017.

\bibitem[Rezende and Mohamed(2015)]{flow2}
Danilo Rezende and Shakir Mohamed.
\newblock Variational inference with normalizing flows.
\newblock In \emph{International conference on machine learning}, pages
  1530--1538. PMLR, 2015.

\bibitem[Richens et~al.(2020)Richens, Lee, and Johri]{medicine}
Jonathan~G Richens, Ciar{\'a}n~M Lee, and Saurabh Johri.
\newblock Improving the accuracy of medical diagnosis with causal machine
  learning.
\newblock \emph{Nature communications}, 11\penalty0 (1):\penalty0 1--9, 2020.

\bibitem[Richens et~al.(2022)Richens, Beard, and Thompson]{harm}
Jonathan~G. Richens, Rory Beard, and Daniel~H. Thompson.
\newblock Counterfactual harm.
\newblock 2022.

\bibitem[Sanchez and Tsaftaris(2022)]{Sanchez2022DiffusionCM}
Pedro Sanchez and Sotirios~A. Tsaftaris.
\newblock Diffusion causal models for counterfactual estimation.
\newblock In \emph{CLeaR}, 2022.

\bibitem[Sharma et~al.(2021)Sharma, Syrgkanis, Zhang, and K{\i}c{\i}man]{dowhy}
Amit Sharma, Vasilis Syrgkanis, Cheng Zhang, and Emre K{\i}c{\i}man.
\newblock Dowhy: Addressing challenges in expressing and validating causal
  assumptions.
\newblock \emph{arXiv preprint arXiv:2108.13518}, 2021.

\bibitem[Shpitser and Pearl(2008)]{shpitser2008complete}
Ilya Shpitser and Judea Pearl.
\newblock Complete identification methods for the causal hierarchy.
\newblock \emph{Journal of Machine Learning Research}, 9:\penalty0 1941--1979,
  2008.

\bibitem[Shpitser and Pearl(2012)]{cf-ident1}
Ilya Shpitser and Judea Pearl.
\newblock What counterfactuals can be tested.
\newblock \emph{arXiv preprint arXiv:1206.5294}, 2012.

\bibitem[Song and Ermon(2019)]{score}
Yang Song and Stefano Ermon.
\newblock Generative modeling by estimating gradients of the data distribution.
\newblock \emph{Advances in Neural Information Processing Systems}, 32, 2019.

\bibitem[Squires and Uhler(2022)]{discovery1}
Chandler Squires and Caroline Uhler.
\newblock Causal structure learning: a combinatorial perspective.
\newblock \emph{arXiv preprint arXiv:2206.01152}, 2022.

\bibitem[Sutton and Barto(2018)]{rl}
Richard~S Sutton and Andrew~G Barto.
\newblock \emph{Reinforcement learning: An introduction}.
\newblock MIT press, 2018.

\bibitem[Vlontzos et~al.(2022)Vlontzos, Kainz, and Lee]{disc1}
Athanasios Vlontzos, Bernhard Kainz, and Ciar{\'a}n Lee.
\newblock Estimating categorical counterfactuals via deep twin networks.
\newblock 2022.

\bibitem[Witty et~al.(2021)Witty, Jensen, and Mansinghka]{emp2}
Sam Witty, David Jensen, and Vikash Mansinghka.
\newblock A simulation-based test of identifiability for bayesian causal
  inference.
\newblock \emph{arXiv preprint arXiv:2102.11761}, 2021.

\bibitem[Xia et~al.(2021)Xia, Lee, Bengio, and Bareinboim]{emp1}
Kevin Xia, Kai-Zhan Lee, Yoshua Bengio, and Elias Bareinboim.
\newblock The causal-neural connection: Expressiveness, learnability, and
  inference.
\newblock \emph{Advances in Neural Information Processing Systems},
  34:\penalty0 10823--10836, 2021.

\bibitem[Zhang et~al.(2018)Zhang, B{\"u}tepage, Kjellstr{\"o}m, and Mandt]{vi3}
Cheng Zhang, Judith B{\"u}tepage, Hedvig Kjellstr{\"o}m, and Stephan Mandt.
\newblock Advances in variational inference.
\newblock \emph{IEEE transactions on pattern analysis and machine
  intelligence}, 41\penalty0 (8):\penalty0 2008--2026, 2018.

\bibitem[Zhang and Zhang(1999)]{monotone1}
Hong Zhang and Zhen Zhang.
\newblock Feedforward networks with monotone constraints.
\newblock In \emph{IJCNN'99. International Joint Conference on Neural Networks.
  Proceedings (Cat. No.99CH36339)}, volume~3, pages 1820--1823 vol.3, 1999.
\newblock \doi{10.1109/IJCNN.1999.832655}.

\bibitem[Zhang et~al.(2022)Zhang, Tian, and Bareinboim]{partial}
Junzhe Zhang, Jin Tian, and Elias Bareinboim.
\newblock Partial counterfactual identification from observational and
  experimental data.
\newblock In \emph{International Conference on Machine Learning}, pages
  26548--26558. PMLR, 2022.

\end{thebibliography}

\begin{comment}
\appendix

\section{My Proof of Theorem 1}

This is a boring technical proof.

\section{My Proof of Theorem 2}

This is a complete version of a proof sketched in the main text.
\end{comment}

\end{document}